\newtheorem{theorem}{Theorem}
\newtheorem{definition}{Definition}
\title{\LARGE \bf
Force-Safe Environment Maps and Real-Time Detection for Soft Robot Manipulators}
\author{Akua K. Dickson$^{1}$, Juan C. Pacheco Garcia$^{2}$, Andrew P. Sabelhaus$^{1,2}$
\thanks{This work was in part supported by the U.S. National Science Foundation under Award No. 2340111, 2209783, and a Graduate Research Fellowship.}
\thanks{$^1$A.K. Dickson and A.P. Sabelhaus are with the Division of Systems Engineering, Boston University, Boston MA, USA. {\tt \footnotesize \{akuad, asabelha\}@bu.edu} }
\thanks{$^2$J.C. Pacheco Garcia, and A.P. Sabelhaus are with the Department of Mechanical Engineering, Boston University, Boston MA, USA. {\tt \footnotesize \{jcp29, asabelha\}@bu.edu} } 
}
\begin{document}
\pagestyle{empty}
\urlstyle{tt}
\maketitle
\thispagestyle{empty}
\begin{abstract}
Soft robot manipulators have the potential for deployment in delicate environments to perform complex manipulation tasks. However, existing obstacle detection and avoidance methods do not consider limits on the forces that manipulators may exert upon contact with delicate obstacles. This work introduces a framework that maps force safety criteria from task space (i.e. positions along the robot's body) to configuration space (i.e. the robot's joint angles) and enables real-time force safety detection.
We incorporate limits on allowable environmental contact forces for given task-space obstacles, and map them into configuration space (C-space) through the manipulator’s forward kinematics. This formulation ensures that configurations classified as safe are provably below the maximum force thresholds, thereby allowing us to determine force-safe configurations of the soft robot manipulator in real-time. We validate our approach in simulation and hardware experiments on a two-segment pneumatic soft robot manipulator. Results demonstrate that the proposed method accurately detects force safety during interactions with deformable obstacles, thereby laying the foundation for real-time safe planning of soft manipulators in delicate, cluttered environments.
\end{abstract}

\section{Introduction}
\label{sec:introduction}
Soft robot manipulators are widely known for their ability to interact compliantly with their environments to perform delicate tasks such as medical procedures \cite{majidi_2014_soft_robots, Burgner-Kahrs2015Continuum}. Their intrinsic compliance often results in lower contact forces compared to rigid counterparts \cite{abidi_intrinsic_2017}, which has led to the
common perception that soft robots are inherently safe. However, compliance alone may not guarantee safety. The magnitude of environmental contact forces during robot motion depends not only on the robot’s mechanical properties but also on its dynamics and the stiffness of the environment \cite{Wang2024Perceived}. For sensitive tasks such as medical procedures, it is desirable to limit these environmental contact forces below certain limits \cite{Burgner-Kahrs2015Continuum}. This motivates a shift from purely geometric collision detection and avoidance toward force safety, where safety is defined by limits on contact forces rather than by the absence of contact.

Collision detection and obstacle avoidance have been extensively studied for rigid robot manipulators  \cite{Wei2018AMethod,Haddadin2017Robot }. Many task-space planning algorithms neglect the robot’s full-body dynamics and often produce trajectories that are not dynamically feasible, particularly for systems with high degrees of freedom (DOF) \cite{Dmitry2011Task, Bounini2017Modified}. To address these limitations, motion planning is reformulated in the \textit{configuration space} \cite{Lozano1979Algorithm,Lozano-Perez1981Automatic, Lozano-Perez1983Spatial}, where each point represents a unique robot configuration. This transformation enables efficient safe motion planning using sampling based approaches such as probabilistic roadmaps (PRM) \cite{Chen2021Path, Kavraki1996Probablistic}, rapidly exploring random trees (RRTs) \cite{Yuan2020Heuristic,Wang2020Collision} and optimization-based approaches \cite{Dai2020Robust, Ratliff2009CHOMP}. While effective for collision-free motion planning of rigid robot manipulators, these methods do not directly extend to compliant soft manipulators. Furthermore, traditional collision detection treats safety as a purely geometric notion, i.e., the manipulator is considered safe as long as its body does not intersect any task space obstacle \cite{Hongyan2022Real-Time,Chakravarthy1998Obstacle}. As such, these approaches do not consider obstacle deformation and force safety criteria (Defn. 1 below) \cite{Dickson2025Safe}.
\begin{figure}[!t]
\centering     \includegraphics[width=1.0\columnwidth]{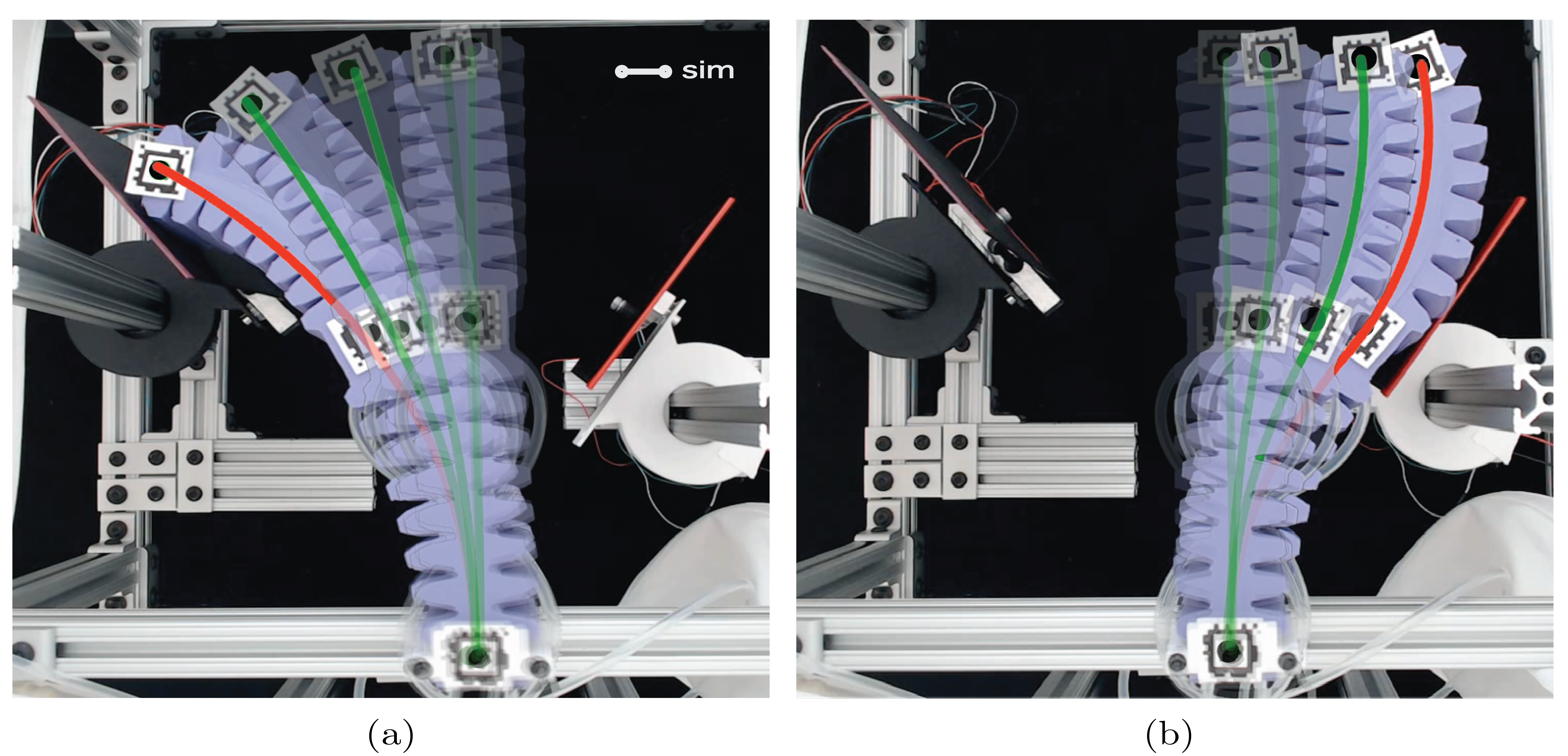}
\vspace{-0.5cm}
\caption{Soft robot manipulator motions in hardware and simulation are overlaid on top of each other, with colors indicating force safety: green for safe and red for unsafe configurations. (a) shows a timelapse of motion toward the first obstacle, and (b) toward the second obstacle.}
\label{fig:overview_diagram}
\vspace{-0.5cm}
\end{figure} 

In contrast, soft robot manipulators possess infinite or very high DOF, making both configuration space representation and collision reasoning far more challenging.
Existing work in soft robot motion planning often extends rigid-robot sampling-based methods such as RRT* \cite{Khan2020Control} and potential field methods \cite{Ataka2016Real-Time} under kinematic assumptions such as the piecewise constant-curvature approximation. Furthermore, task-space collision avoidance methods \cite{Liu2023Path} generate collision-free trajectories while considering kinematic constraints of soft manipulators. Configuration-space based collision detection and motion planning methods \cite{Veil2025Shape, Li2012Exact} on the other hand, further enhance accuracy and dynamic feasibility by leveraging the exact constant-curvature geometry and operating within the state space of the robot. While these methods enable collision detection and obstacle avoidance, they typically treat collisions as strictly undesirable. As a result, they do not provide guarantees on force safety or account for environmental contact forces. This manuscript bridges this gap by presenting a novel approach for real-time force-unsafe deformation detection for soft robot manipulators (Fig. \ref{fig:overview_diagram}). 
We contribute:
\begin{itemize}
    \item A method (Fig. \ref{fig:overviewfigure2}) to map force safety criteria for soft robot manipulators into the robot's configuration space.
    \item A method to detect force-safe states in real time.
    \item A validation in both simulation and hardware, demonstrating that our method detects force safety.
\end{itemize}

\begin{figure*}[!t]
    \centering \includegraphics[width=1\textwidth]{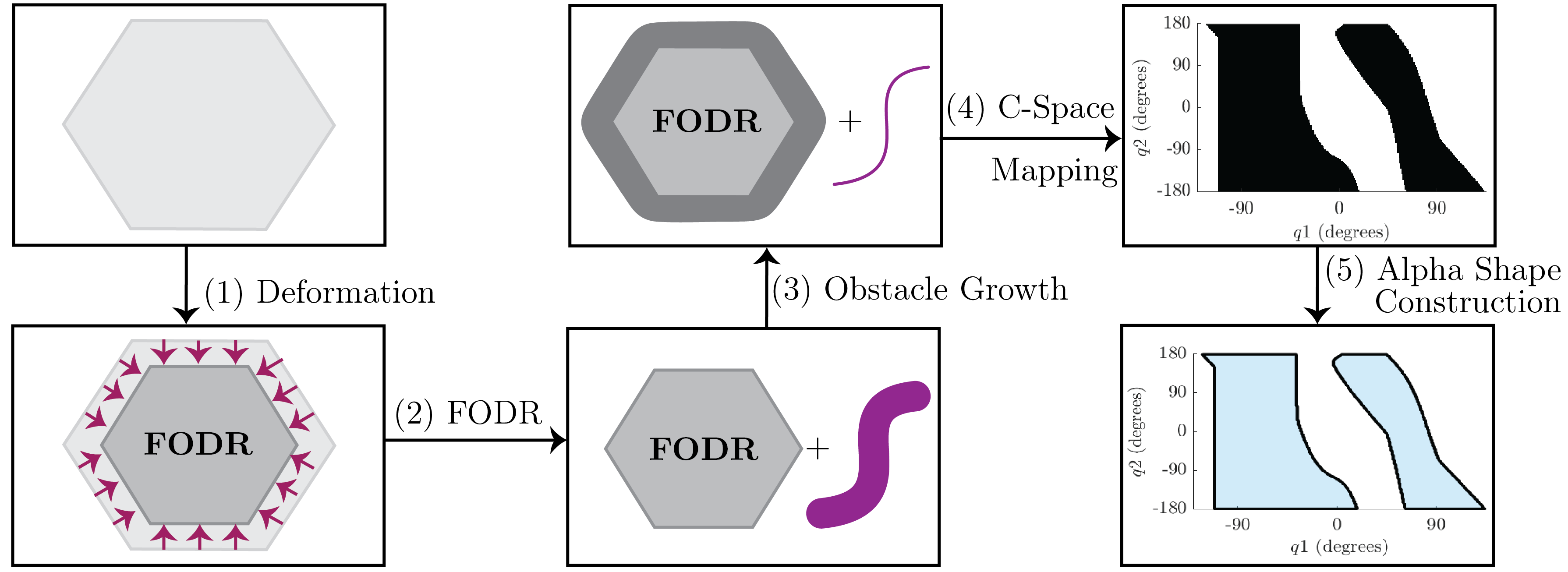}
    \vspace{-0.5cm}
    \caption{Overview of our force-safe environment maps and real-time detection approach. (1) illustrates the deformation of a polygonal environment along its inward normal direction to define the force-unsafe deformation region (2). (3) shows inflation of the FODR obstacles by the manipulator’s thickness, allowing us to consider only the backbone for subsequent computations. (4) demonstrates our method for mapping FODR poses to configuration space and (5) shows computation of alpha shape regions for efficient force safety queries for multiple robot configurations.}    \label{fig:overviewfigure2}
\end{figure*}

\section{Force-Unsafe Obstacle Maps in Task Space}
\label{sec:force_safety_task}
We adopt the notion of \emph{force safety} as
introduced in \cite{Dickson2025Safe}, where safety is not defined by
zero collisions with the environment but by environmental collisions that are bounded by maximum force thresholds.

\begin{definition}\emph{Force Safety:}
Following \cite{Dickson2025Safe}, a soft robot manipulator's motion is said to be \emph{force-safe} if its applied contact forces
$\bF(t)$ remain within a known admissible set $F_{\mathrm{safe}}$ for all time,
i.e.,
\[    \bF(0) \in \mathcal{F}_{\mathrm{safe}} \;\;\Rightarrow\;\; \bF(t) \in \mathcal{F}_{\mathrm{safe}}, \quad \forall t \geq 0.\]
Equivalently, force safety is achieved when $\mathcal{F}_{\mathrm{safe}}$ is an invariant
set of the system dynamics.
\end{definition}

To connect geometry with force limits, we adopt a model of the
deformable obstacle environment as a polytope $\mathcal{N} = \{ \br \mid H\br \leq \bh \}$ representing the original obstacle region (Fig. \ref{fig:shrinking_obstacles}). Each facet of this polytope is assumed to deform
elastically in its inward normal direction, with a
force–displacement relationship $\psi(\cdot)$. A maximum admissible contact
force $F^{max}$ then corresponds to a maximum deflection $n^{max} =
\delta^{-1}\psi^{-1}(F^{max})$, where $\delta \in [0,1]$ is a safety factor. Translating each facet of $\mathcal{N}$ inward by $n^{max}$
yields a new polytope
$\mathcal{P} = \{ \br \mid H'\br \leq \bh' \},$
which represents the set of robot poses that are guaranteed to apply more than $F^{max}$ to the obstacles in the environment. Here,  
$H' = H$, and $\mathbf{h}' = \mathbf{h} - n^{max} \sqrt{m^2+1}$ where $m$ is the slope of each hyperplane that forms the polytope $\mathcal{P}$. As shown
in \cite{Dickson2025Safe}, this translation provides a conservative but
tractable approximation of force safety in task space.

The set $\mathcal{P}$ can be interpreted as a \emph{force-unsafe obstacle deformation region} in task
space as seen in Fig. \ref{fig:shrinking_obstacles}. This construction encodes both geometry and allowable contact
forces. Therefore, the boundary of $\mathcal{P}$ distinguishes between robot poses that
are (i) collision-free, (ii) in contact but force-safe, and (iii) unsafe due to
exceeding $F^{max}$. This task-space representation forms the foundation for
our subsequent mapping into configuration space, where we construct
\emph{force-safe configuration space obstacle maps} for motion planning.

\begin{figure}[!b]
\centering    
\includegraphics[width=1.0\columnwidth]{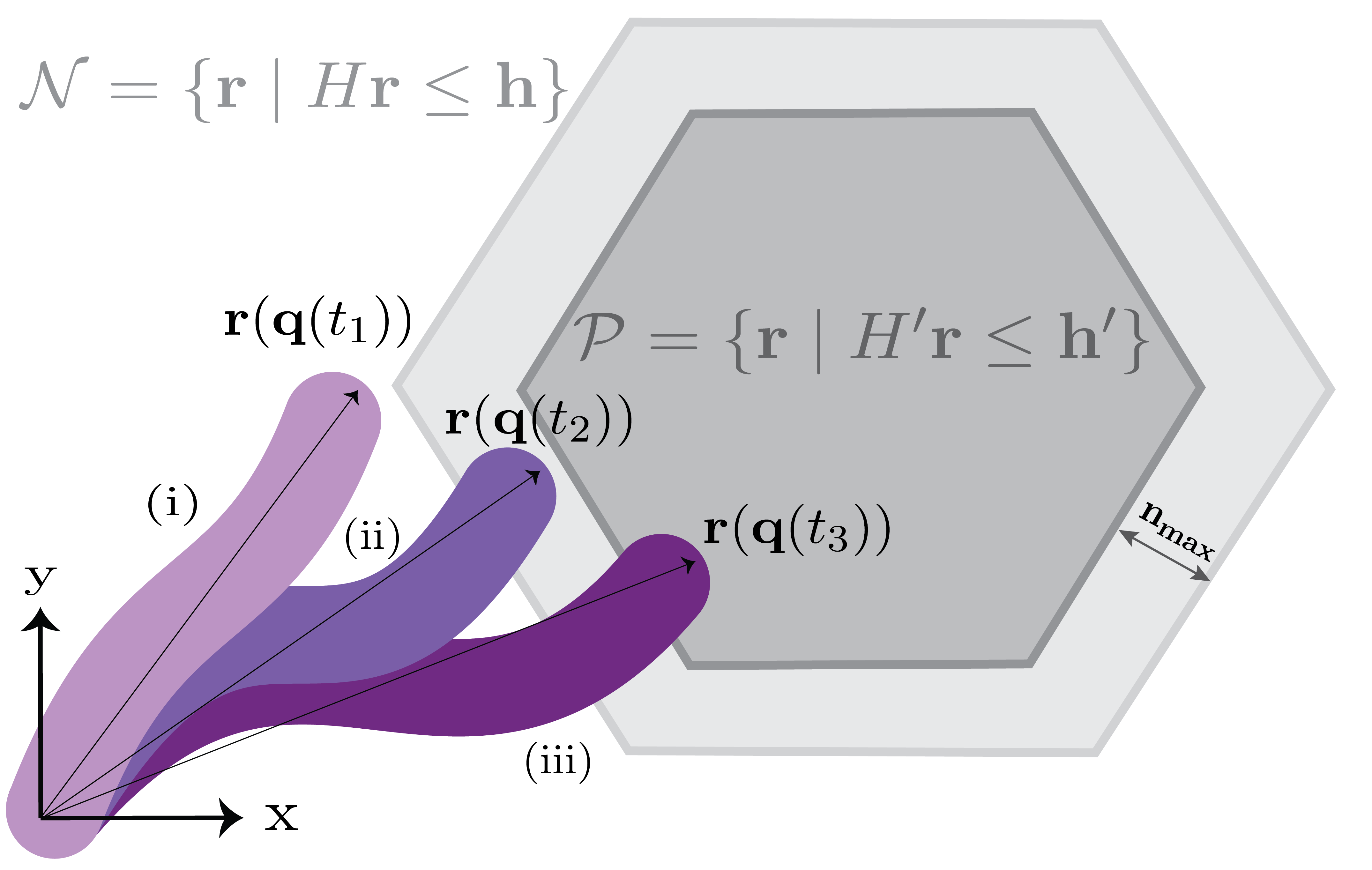}
\vspace{-0.3cm}
\caption{We define the original obstacle as polytope $\mathcal{N}$ and the force-unsafe deformation region as $\mathcal{P}$. Robot poses illustrate (i) collision-free (no contact) configuration, (ii) contact within force-safe limits, and (iii) unsafe contact exceeding $F^{max}$.}
\label{fig:shrinking_obstacles}
\end{figure}

\noindent We consider this definition of force safety under the following mild assumptions. 
\begin{itemize}
    \item The mathematical relationship between force and displacement is $\mathbf{F}_i(n_i) = \delta\psi_i(n_i) \mathbf{n}_i$ as we only consider normal forces.
    \item The  function $\psi_i(\cdot) : \mathbb{R} \mapsto \mathbb{R}$ that maps force to displacement is strictly monotonic for $n_i > 0$ and is $\psi = 0$ for $n \leq 0$.
\end{itemize}

\begin{definition}
The \emph{Force-Unsafe Obstacle Deformation Region ($\mathrm{FODR}$)}, 
is defined as the subset of task space corresponding to obstacle deformations that exceed the admissible force  threshold \(F^{max}\), i.e.,
$\mathrm{FODR} = \mathcal{P}$.
Hence, configurations whose  poses lie on or within $\mathrm{FODR}$ correspond to contacts that produce forces greater than or equal to $F^{max}$, i.e.,
$\br \in \mathrm{FODR} \;\;\Rightarrow\;\; \bF(\br) \geq F^{max}.$
\end{definition}

\begin{figure}[!t]
\centering    
\includegraphics[width=1.0\columnwidth]{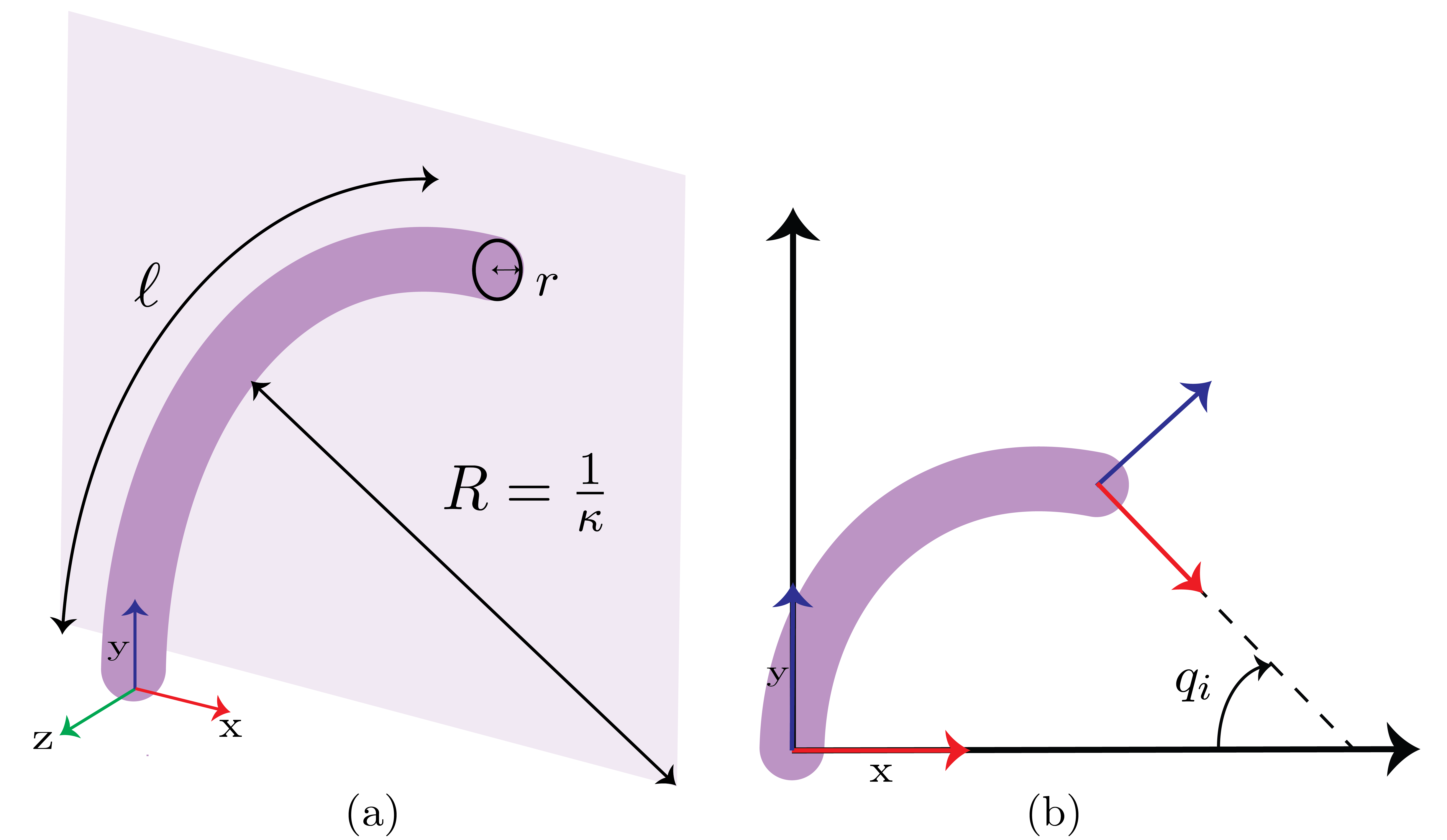}
\vspace{-0.4cm}
\caption{Piecewise Constant Curvature (PCC) kinematics of the soft robot manipulator \cite{Webster2010Design}. (a) 3D schematic of a single segment, parameterized by arc length $\ell$, curvature $\kappa$, and thickness $r$, with radius of curvature $R = 1/\kappa$. (b) Planar view showing the joint angle and backbone configuration of a single segment.}
\label{fig:softforwardkinematics}
\end{figure}

\section{Configuration Space Force-Safe Environment Maps and Real-Time Detection Approach}
In this section, we map the task space $\mathrm{FODR}$ into the robot’s configuration space and develop an algorithm for real-time force safety detection. This mapping is necessary because motion plans and controllers are designed in configuration space, where the robot's state space, control inputs and actuation limits are defined. 

\subsection{Soft Robot Manipulator Forward Kinematics}
\label{sec:softmanipulatorforwardkinematics}
For our proof-of-concept, we use the piecewise constant curvature (PCC) kinematics of the soft robot manipulator to map task space force safety into configuration space. 
The geometry and kinematics of a PCC soft robot manipulator allows us to determine the pose of points along its backbone. 
We consider an $n$ segment soft robot under the PCC assumption. Here, each $q_i$ represents the joint angle of the corresponding segment $i$ with an arc length or $L_i$. Under the piecewise constant curvature (PCC) assumption, each segment of the soft manipulator bends along a circular arc of fixed curvature as shown in Fig. \ref{fig:softforwardkinematics}. Let the $i$-th segment have arc length $L_i$, total bend angle $q_i$, curvature
$\kappa_i = \frac{q_i}{L_i}$, and 
radius of curvature $R_i = 1/\kappa_i = L_i/q_i$.

We parameterize points along the segment by the arc-length variable $s\in[0,L_i]$. Let
$\mathbf{r}_i(s) =
\begin{bmatrix}
x_i(s)& y_i(s)& z_i(s)
\end{bmatrix}^\top
\in\mathbb{R}^3$
denote the position of the point at arc-length $s$ along the centerline of segment~$i$. 
We represent the pose of the points along the circular backbone of the robot manipulator in homogeneous coordinates:
$\mathbf{P}_i(s)
=
\begin{bmatrix}
\mathbf{r}_i(s)&1
\end{bmatrix}^\top \in\mathbb{R}^4$
Each segment’s backbone is discretized as
$\{\mathbf{P}_i(s_j)\}_{j=1}^{N_s}$.

\begin{figure}[!t]
\centering    
\includegraphics[width=1.0\columnwidth]{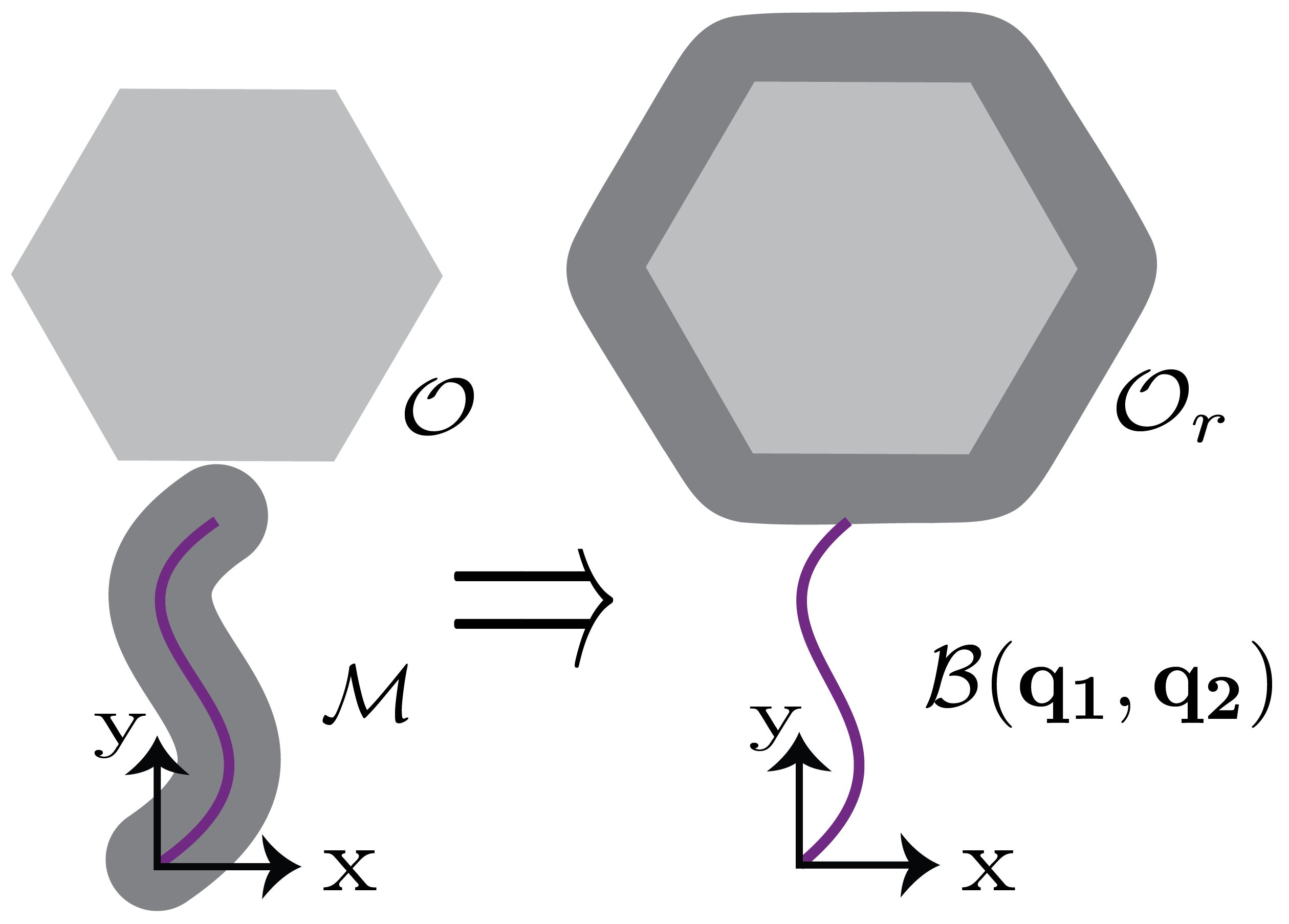}
\vspace{-0.3cm}
\caption{$\mathrm{FODR}$ inflation using Minkowski sums. (Left) The $\mathrm{FODR}$ (light gray) and soft robot manipulator are shown. (Right) The $\mathrm{FODR}$ boundary is inflated by the manipulator’s thickness (dark gray), resulting in a grown obstacle that accounts for the robot’s thickness}
\label{fig: obstacle+robotthickness}
\vspace{-0.3cm}
\end{figure}

The robot's kinematics are defined by a series of $i$ transformation matrices $T_0^1, \cdots, T_{n-1}^n$ which map each reference frame to a successive one. The transformation matrix $T_{i-1}^i$ is given by:
$\begin{bmatrix}
\mathbf{R}_i & \mathbf{t}_i \\
\mathbf{0}^{1\times3} & 1
\end{bmatrix}$,
where $\mathbf{R}_i\in SO(3)$ is the rotation matrix and $\mathbf{t}_i\in\mathbb{R}^3$ is the translation vector.

\subsection{Designing Task Space Obstacle Maps}
We expand each $\mathrm{FODR}$ in  task space by a margin equal to the manipulator’s thickness $r$. Intuitively, this ensures that if the centerline of the manipulator stays outside this expanded region, then the entire manipulator also remains safe. Directly checking whether the manipulator’s full geometry intersects each $\mathrm{FODR}$ at every configuration would be computationally expensive, since it requires computing the position and orientation of each segment through forward kinematics, then checking for intersections between the robot's thickened segments and the $\mathrm{FODR}$ at every time step.

\begin{algorithm}[]
\LinesNumbered
\KwIn{Set $\{\mathrm{FODR}_1, \dots, \mathrm{FODR}_N\}$ of $N$ task-space FODRs, and manipulator $\mathcal{M}$ with thickness $r$}
\KwOut{Grown obstacle $\mathcal{O}_r \in \mathbb{R}^3$ that accounts for 
the manipulator's thickness $r$}
Take the union of all FODRs:
$\mathcal{O} = \bigcup_{i=1}^N \mathrm{FODR}_i$\;
Grow $\mathcal{O}$ by the manipulator's thickness $r$ using Minkowski sums:
$\mathcal{O}_r = \mathcal{O} \oplus \mathcal{M} 
= \bigl\{\, \mathbf{o} + \mathbf{m} \;:\; 
\mathbf{o} \in \mathcal{O},\; \mathbf{m} \in \mathcal{M} \,\bigr\}$
\newline where $\mathcal{M} = \{\mathbf{m} \in \mathbb{R}^3 \;:\; \|\mathbf{m} - \mathbf{b}\| \le r, \forall \ \mathbf{b} \in \mathcal{B} \}$
\caption{Obstacle Growth Algorithm }
\label{alg:minkowski_sum}
\end{algorithm}

\noindent Instead, we achieve the same effect geometrically by growing the obstacle regions using Minkowski sums (Alg. \ref{alg:minkowski_sum}). This operation constructs a grown obstacle region that incorporates the manipulator’s thickness $r$ as shown in Fig. \ref{fig: obstacle+robotthickness}.
Geometrically, $\mathcal{O}_r$ represents the region of all points lying within a distance $r$ of the original $\mathrm{FODR}$ boundaries.
In other words, if the manipulator’s \emph{centerline} passes through any point of $\mathcal{O}_r$, then the body of radius $r$ surrounding that point would intersect $\mathcal{O}$. 
Hence, testing the centerline $\mathcal{B}(\mathbf{q})$ against $\mathcal{O}_r$ is equivalent to testing the full robot body against the original $\mathrm{FODR}$ obstacles $\mathcal{O}$.

\begin{figure*}[!t]
    \centering
    \vspace{0.2cm} \includegraphics[width=1.0\textwidth]{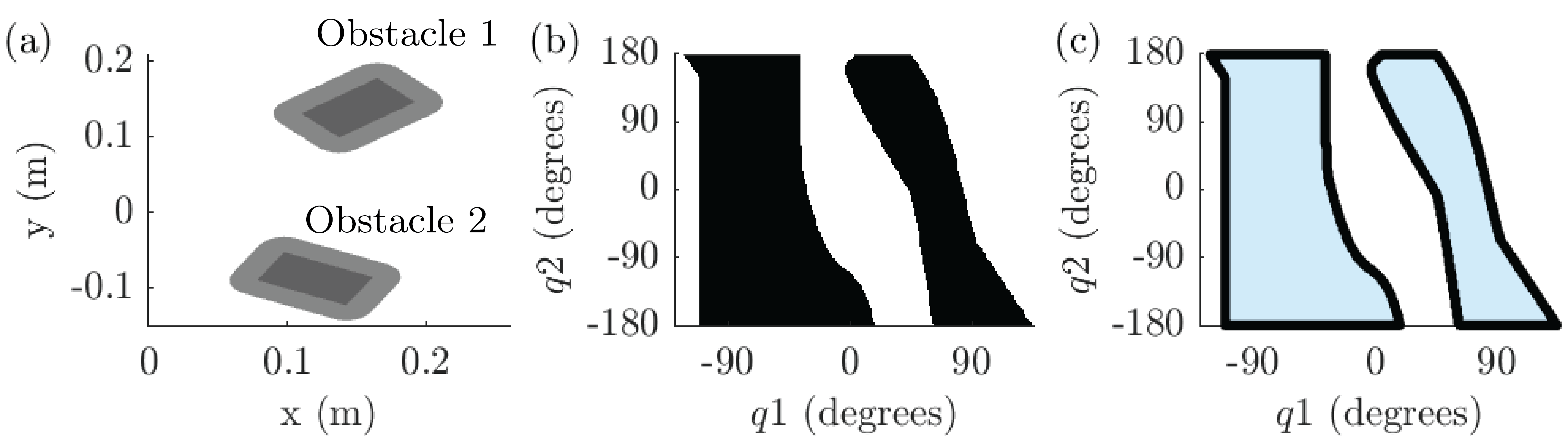}
    \vspace{-0.5cm}
    \caption{Task space (a) and corresponding configuration space (b) obstacle maps obtained from Alg. \ref{alg:task_to_cspace}. Grown task space obstacles are shown for easy reference; configuration space mapping identifies force-unsafe (black) and force-safe (white) joint configurations. Computed
    alpha shape representation (c) of force-unsafe regions in configuration space for efficient force-safety queries during planning.}
    \label{fig:taskspace_map}
    \vspace{-0.2cm}
\end{figure*}

\subsection{Real-Time Force-Safe Detection Algorithm}
Accurate real-time detection of force-unsafe configurations is critical for preventing excessive environmental contact forces during manipulation. Our goal is to determine, for any given configuration $\mathbf{q}$, whether the manipulator violates maximum force thresholds. 
To achieve this efficiently, we formulate a force-safe detection algorithm that relies on the manipulator’s kinematic model to map force-safe task space poses to their corresponding manipulator joint angles in configuration space. 

\begin{algorithm}[]
\caption{Real-Time Force-Safe Detection Algorithm}
\label{alg:task_to_cspace}
\KwIn{
Grown obstacle set $\mathcal{O}_r \subset \mathbb{R}^3$, 
segment arc lengths $\{L_i\}_{i=1}^{n}$, 
and discretized joint grids 
$\mathcal{G}_i = \{q_{i,k_i}\}_{k_i=1}^{N_i}$ for each segment $i$.}
\KwOut{Configuration-space obstacle array $C_{\text{obs}}[k_1,\dots,k_n]$}
\For{each $\mathbf{q} = (q_{1,k_1}, \dots, q_{n,k_n}) \in \mathcal{G}_1 \times \cdots \times \mathcal{G}_n$}
{ Initialize backbone: $\mathcal{B}(\mathbf{q}) \gets \varnothing$ 
\newline
\For{$i = 1$ to $n$}
    {
        \For{$s = 1$ to $N_s$}
        {
            Compute local backbone point: $\mathcal{B}_i(q_{i,k_i}, s)$
             Append to full backbone: $\mathcal{B}(\mathbf{q}) \gets \mathcal{B}(\mathbf{q}) \cup \mathcal{B}_i(q_{i,k_i}, s)$
        }
    }
     \textbf{If} $\mathcal{B}(\mathbf{q}) \cap \mathcal{O}_r \neq \emptyset$ \textbf{then} mark $\mathbf{q}$ as force-unsafe.
     Set $C_{\text{obs}}[k_1,\dots,k_n] \gets 1$ if intersection occurs, else $0$.}
\end{algorithm}

To obtain a global map of force-safe and unsafe configurations, we discretize the configuration space into uniform joint grids:
$q_i \in \mathcal{G}_i = \{q_{i,k_i}\}_{k_i=1}^{N_i}$.
At each grid point, the backbone is compared against $\mathcal{O}_r$. 
The output of this procedure is a configuration-space obstacle matrix
$C_{\text{obs}}[k_1,\dots,k_n] = \chi(\mathbf{q})$. The overall procedure is highlighted in Alg~\ref{alg:task_to_cspace}, which efficiently constructs the environment map.

\begin{theorem}[Force-Safe Configuration Detection]
\label{thm:forcesafe}
Given the grown force-unsafe regions $\mathcal{O}_r$, which accounts for the manipulator’s finite thickness,
a configuration $\mathbf{q}$ of the soft manipulator is \emph{force-safe} if 
$\mathcal{B}(\mathbf{q}) \cap \mathcal{O}_r = \emptyset$,
where $\mathcal{B}(\mathbf{q})$ denotes the manipulator’s backbone obtained from the PCC kinematic model.
Equivalently, the force-unsafe detection function
\begin{equation}
\chi(\mathbf{q}) =
\begin{cases}
1, & \exists\, \mathbf{b} \in \mathcal{B}(\mathbf{q}) \text{ such that } \mathbf{b} \in \mathcal{O}_r, \\
0, & \text{otherwise,}
\end{cases}
\end{equation}
serves as an indicator for force-unsafe configurations, where $\chi(\mathbf{q}) = 0$ implies the manipulator is force-safe.
\end{theorem}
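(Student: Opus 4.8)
The plan is to prove the stated sufficient condition in its contrapositive form: I will show that if configuration $\mathbf{q}$ is \emph{not} force-safe, then the backbone must meet the grown region, i.e. $\mathcal{B}(\mathbf{q}) \cap \mathcal{O}_r \neq \emptyset$; equivalently, $\mathcal{B}(\mathbf{q}) \cap \mathcal{O}_r = \emptyset$ forces $\chi(\mathbf{q}) = 0$ and certifies safety. The argument rests on two pillars already assembled in the excerpt: (i) the correspondence, furnished by Definition 2 together with the strict monotonicity of the force--displacement law $\psi_i$, that a contact point exerts a force $\geq F^{max}$ if and only if it lies in some $\mathrm{FODR}_i \subseteq \mathcal{O}$; and (ii) the Minkowski-sum equivalence that testing the inflated body against $\mathcal{O}$ is the same as testing the centerline against $\mathcal{O}_r = \mathcal{O} \oplus \mathcal{M}$.

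First I would fix notation for the robot body: since every material point of the manipulator lies within the thickness radius $r$ of the centerline, the body at configuration $\mathbf{q}$ is exactly the inflation $\mathcal{B}(\mathbf{q}) \oplus \mathcal{M}$, where $\mathcal{M}$ is the closed ball of radius $r$ centered at the origin, and hence symmetric, so that $\mathbf{m} \in \mathcal{M} \Leftrightarrow -\mathbf{m} \in \mathcal{M}$. Next, assuming $\mathbf{q}$ is not force-safe, I invoke pillar (i): there exists a body point $\mathbf{p} \in \mathcal{B}(\mathbf{q}) \oplus \mathcal{M}$ with contact force $\mathbf{F}(\mathbf{p}) \geq F^{max}$, and therefore $\mathbf{p} \in \mathrm{FODR}_i \subseteq \mathcal{O}$ for some $i$. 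Writing $\mathbf{p} = \mathbf{b} + \mathbf{m}$ with $\mathbf{b} \in \mathcal{B}(\mathbf{q})$ and $\mathbf{m} \in \mathcal{M}$, I solve for the centerline point $\mathbf{b} = \mathbf{p} + (-\mathbf{m})$. By symmetry of $\mathcal{M}$ we have $-\mathbf{m} \in \mathcal{M}$, and since $\mathbf{p} \in \mathcal{O}$, the definition of the Minkowski sum yields $\mathbf{b} \in \mathcal{O} \oplus \mathcal{M} = \mathcal{O}_r$. Thus $\mathbf{b} \in \mathcal{B}(\mathbf{q}) \cap \mathcal{O}_r$, so the intersection is nonempty, completing the contrapositive. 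Reading the chain backward establishes the theorem and the indicator interpretation of $\chi$.

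I expect the main obstacle to be the rigorous justification of the Minkowski step, specifically the reliance on the symmetry of the structuring element $\mathcal{M}$: the inflation that grows the obstacle by $r$ can only be inverted to recover a centerline point inside $\mathcal{O}_r$ because $-\mathbf{m}$ is again an admissible offset. A secondary subtlety worth stating explicitly is the passage between the continuous backbone assumed in the theorem and the discretized samples $\{\mathbf{P}_i(s_j)\}$ used in Algorithm 2: the guarantee is exact for the continuous curve $\mathcal{B}(\mathbf{q})$, so I would either carry the continuous object throughout the proof or note that the sampling is a conservative approximation that does not weaken the stated sufficient condition. Finally, I would make pillar (i) airtight by spelling out that strict monotonicity of $\psi_i$ makes the inward facet translation by $n^{max}$ both necessary and sufficient for reaching the force threshold, so that \emph{force-unsafe} and \emph{membership in} $\mathcal{O}$ are genuinely interchangeable rather than one-directional.
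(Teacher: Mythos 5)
Your proof is correct and, in substance, follows the same logical path as the paper's — but the comparison is lopsided, because the paper's entire proof is a bare two-implication chain: $\chi(\mathbf{q}) = 0 \implies \forall \mathbf{b} \in \mathcal{B}(\mathbf{q}),\ \mathbf{b} \notin \mathcal{O}_r \implies \mathbf{F}(\mathbf{q}) \in \mathcal{F}_{\mathrm{safe}}$, where the first step is just the definition of $\chi$ and the second is asserted with no justification. What you supply is exactly the content hidden inside that second ``$\implies$'': (a) the robot body is the tube $\mathcal{B}(\mathbf{q}) \oplus \mathcal{M}$ around the centerline; (b) because the structuring element $\mathcal{M}$ (the origin-centered ball of radius $r$) is centrally symmetric, body-versus-$\mathcal{O}$ intersection is equivalent to centerline-versus-$\mathcal{O}_r$ intersection — your explicit $\mathbf{b} = \mathbf{p} + (-\mathbf{m})$ manipulation; and (c) strict monotonicity of $\psi_i$ upgrades Definition 2's one-directional statement ($\mathbf{r} \in \mathrm{FODR} \Rightarrow \mathbf{F}(\mathbf{r}) \geq F^{max}$) to the equivalence actually needed, since the theorem requires the converse direction (force-unsafe $\Rightarrow$ some body point lies in $\mathcal{O}$). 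Your choice of contrapositive versus the paper's forward chain is immaterial; what your writeup buys is that the two genuinely load-bearing steps — Minkowski symmetry and the monotonicity converse — are identified and discharged rather than absorbed into one unexplained implication. Incidentally, the paper's Algorithm 1 defines $\mathcal{M}$ with a ``$\forall\, \mathbf{b} \in \mathcal{B}$'' quantifier, which taken literally makes $\mathcal{M}$ an intersection of balls and generally empty; your reading of $\mathcal{M}$ as a single ball of radius $r$ is the correct repair.

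One side remark in your proposal is wrong, though it does not affect the proof itself: you suggest that the backbone sampling of Algorithm 2 is ``a conservative approximation that does not weaken the stated sufficient condition.'' That is backwards. Checking only the sampled points $\{\mathbf{P}_i(s_j)\}$ can miss an intersection occurring between samples, so a sampled verdict of $\chi(\mathbf{q}) = 0$ does not imply that the continuous backbone avoids $\mathcal{O}_r$; discretization is optimistic (anti-conservative), not conservative. Your first option — carrying the continuous curve $\mathcal{B}(\mathbf{q})$ throughout, as the theorem statement itself does — is the right one, and the discretization gap should be stated as an implementation caveat rather than folded into the guarantee.
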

\begin{proof}
\[
\begin{aligned}
\chi(\mathbf{q}) = 0
& \implies  \forall \mathbf{b} \in \mathcal{B} (\mathbf{q}), \; \mathbf{b} \notin \mathcal{O}_r \\ & \implies  \mathbf{F}(\mathbf{q}) \in \mathcal{F}_{\mathrm{safe}} \;\; \forall t \ge 0 
\end{aligned}
\]
\end{proof}
The matrix $C_{\text{obs}}$ provides a discrete yet topologically accurate representation of the manipulator's force-unsafe configurations defined in Theorem \ref{thm:forcesafe} (Fig. \ref{fig:taskspace_map} b). However, practical use in real-time planning and control requires a continuous representation of this mapping that can be queried at arbitrary configurations.

\subsection{Constructing the Continuous Force-Unsafe Set via $\alpha$-Shape Reconstruction}
Although the occupancy grid $C_{\text{obs}}$ is a set of sampled force-unsafe configurations, it is inherently discrete i.e. it only provides force safety information at the grid points and cannot be directly queried for arbitrary values of $\mathbf{q}\in \mathbb{R}^n$. In practice, a manipulator’s motion may pass through configurations that lie between grid points, and without a continuous representation, it would be unclear whether such intermediate configurations are force-safe. To address this limitation and enable computationally efficient, real-time force safety queries for motion planning, we reconstruct a continuous geometric approximation of the force-unsafe region in configuration space (Fig. \ref{fig:taskspace_map} c). 
This continuous set $\mathcal{F}_{\mathrm{unsafe}}$,
is derived from the discrete occupancy grid using a shape reconstruction procedure based on $\alpha$-shape theory.

\subsubsection{Connected Components in the Occupancy Grid}
We first extract the connected components of $C_{\text{obs}}$, identifying clusters of adjacent grid cells labeled as unsafe. 
Let $\{\mathcal{C}_k\}_{k=1}^{N_c}$ denote these components. 
Each component $\mathcal{C}_k$ is represented as a point cloud in the continuous configuration space map:
$S_k = \bigl\{(q_{1,k_1}, \dots, q_{n,k_n}) \;:\; (k_1,\dots,k_n) \in \mathcal{C}_k\bigr\}
$. 
The union of all the sampled force-unsafe sets $S = \bigcup_{k=1}^{N_c} S_k$ 

\subsubsection{Continuous Reconstruction via $\alpha$-Shapes}
The $\alpha$-shape generalizes the concept of a convex hull i.e. a convex hull provides the simplest enclosing polygon, while the $\alpha$-shape allows the boundary to follow the shape of the map more closely by controlling a scale parameter $\alpha$. In this work, for each component $S_k$, we compute the vertices of the enclosing polygon using the algorithm described in \cite{bernardini1997sampling, guo1997surface}. This efficiently determines the ordered set of points $\{\mathbf{v}_1, \dots, \mathbf{v}_m\}$ defining the polygonal boundary. MATLAB’s built-in \texttt{alphaShape} function implements this procedure, yielding the $\alpha$-shape $P_k$ that approximates the force-unsafe region corresponding to $S_k$.
Each resulting $\alpha$-shape $P_k$ is a closed, simple polygon.

\subsubsection{Definition of the Force-Unsafe Set}
The union of all reconstructed polygons  $P_k$ forms the continuous force-unsafe region in configuration space:
$\mathcal{F}_{\mathrm{unsafe}} = \bigcup_{k=1}^{N_c} P_k
\subset \mathbb{R}^n.$
A configuration $\mathbf{q}$ is force-unsafe if it lies within the polygon $\mathcal{F}_{\mathrm{unsafe}}$. Since $\mathcal{F}_{\mathrm{unsafe}}$ is stored as a collection of polygons, safety queries are simplified to standard point-in-polygon membership tests, enabling evaluation of the force-safety condition in real-time.

\subsection{Implementation of the Real-Time Force-Safe Detection Algorithm}
We implement, test, validate and visualize our algorithm in 2D with $n=2$ segments in order to illustrate concepts clearly. To implement Alg.~\ref{alg:task_to_cspace}, we discretize the configuration space over the full joint range $[-180^\circ, 180^\circ]$ with a resolution of $1^\circ$ per joint ($\Delta q_1 = \Delta q_2 = 1^\circ$). This resolution offers a favorable trade-off between accuracy and computational efficiency. 
Each grid configuration $(q_{1,k_1}, q_{2,k_2})$ is assigned a precomputed binary label $C_{\text{obs}}[k_1,k_2] \in \{0,1\}$ indicating whether the corresponding manipulator configuration is force-unsafe. To construct a continuous approximation of the force-unsafe set, we generate an $\alpha$-shape from the discrete grid points. The initial scale parameter is chosen proportional to the grid resolution, $\alpha_0 = c_\alpha \Delta q$, where $c_\alpha>0$ ensures edges connect only neighboring points while avoiding oversmoothing across separate clusters. We iterate over several $\alpha$ values and select the smallest that yields a valid, non-self-intersecting polygon. This process produces a family of disjoint polygons $\{P_k\}$ representing continuous approximations of each discrete component $S_k$. Then, we form a union of all reconstructed polygons $\mathcal{F}_{\mathrm{unsafe}}$. Finally, force safety for any configuration is evaluated by sampling points along the manipulator backbone and testing for intersections with $\mathcal{F}_{\mathrm{unsafe}}$, enabling efficient, real-time assessment.

\section{Simulation Setup and Results}
Our proposed approach discussed in Alg. \ref{alg:task_to_cspace} is validated in simulation using joint-space trajectories of our planar two-segment soft robot manipulator. The results demonstrate the correctness of the computed configuration space obstacle map (Alg. \ref{alg:task_to_cspace}) in validating force safety.

\subsection{Simulation Setup}
In this manuscript, to evaluate our proposed approach under representative but uncontrolled open loop motion, we employ an open-loop nominal controller that generates smooth, continuous actuation using sinusoidal  inputs. Specifically, each sinusoidal input follows: $A_j\sin(2\pi f t)$. We achieve the desired coverage by tuning the amplitude and frequency of the sine waves to be  $A_j=160$ hPa (hecto-Pascal) and $f=0.0071$ Hz. With this, we achieve a coverage of approximately  $\pm 30^\circ$. The sinusoidal signals are designed to span the robot's state space in a repeatable yet dynamic manner enabling us to evaluate our real-time force safety detection approach. The manipulator we use and calibrate in simulation consists of two planar continuum segments of equal length of $0.122 m$. Each segment is discretized as a circular arc with 150 points for the backbone computation (i.e. $N_s = 150$). The thickness or radius $r$ of each segment is $0.02 m$.

To reproduce the hardware setup faithfully, the geometry and placement of the simulated obstacles were calibrated using the actual positions of the force plates in the hardware setup (Fig. \ref{fig:hardware_setup} below). The obstacle planes in simulation were aligned to match the spatial location and orientation measured from the overhead camera’s pixel data of the hardware workspace. Fiducial markers (AprilTags) attached to the force plates provided reference points, allowing the pixel coordinates of the plate edges to be converted into metric workspace coordinates through camera calibration. These coordinates were then used to position the obstacle polygons in the simulation so that their boundaries corresponded precisely to the real force plate locations.

\begin{figure*}[!t]
    \centering
    \vspace{0.2cm} \includegraphics[width=1.0\textwidth]{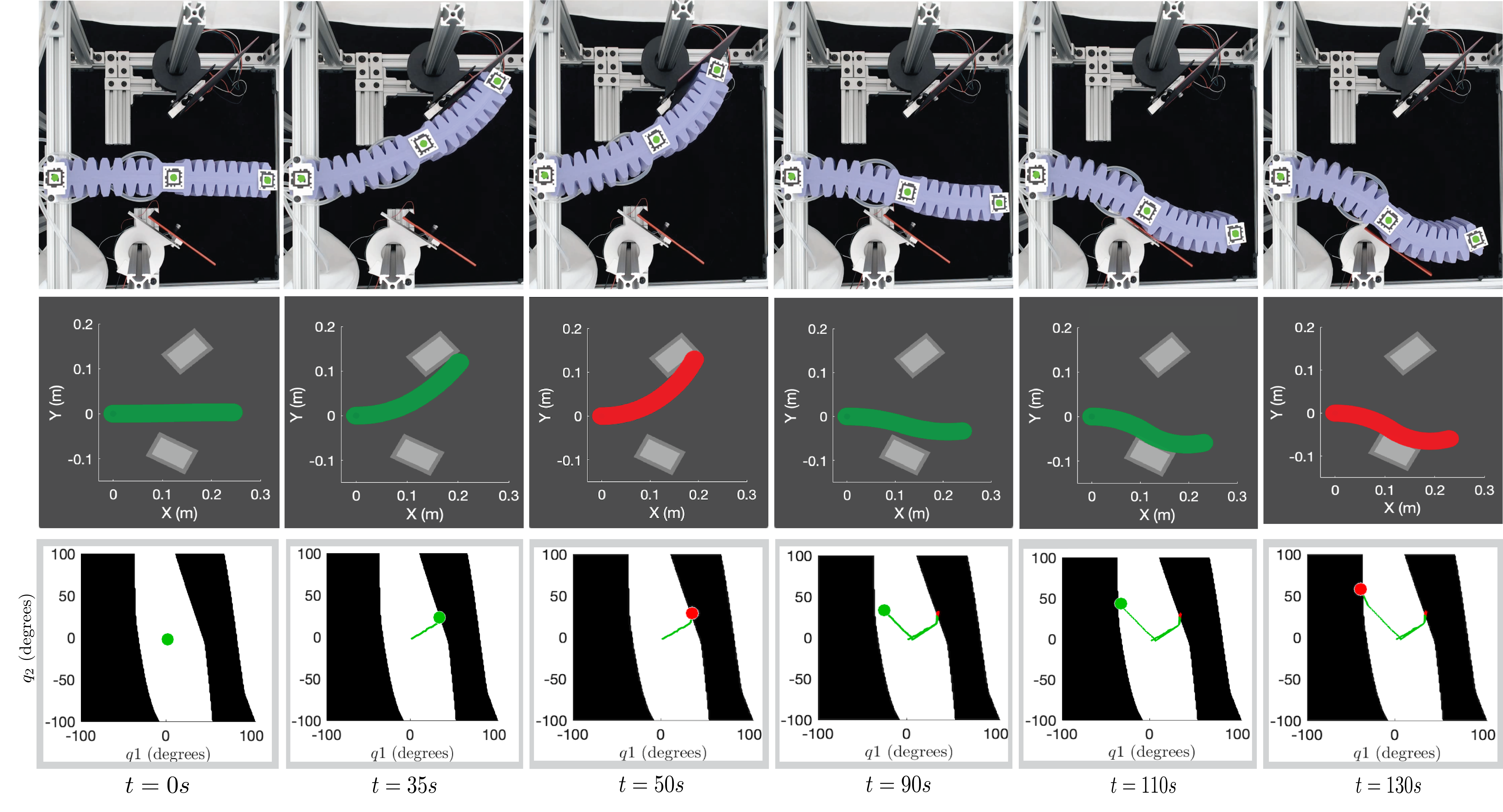}
    \vspace{-0.5cm}
    \caption{Comparison of simulation and hardware results at six time steps, showing equivalent robot configurations in task space (top row: hardware, middle row: simulation) and configuration space (bottom row: simulation). Green and red colors denote force-safe and force-unsafe contact, respectively, as detected by the force safety detection framework. In simulation, we show both original obstacle polytope $\mathcal{N}$ (dark gray) and $\mathrm{FODR}$ region (light gray). This visualization highlights agreement between hardware and simulation at each time step for force safety assessment.}
    \label{fig:sim_hardware_timelapse}
    \vspace{-0.3cm}
\end{figure*}

We compute Alg. \ref{alg:task_to_cspace} to obtain our force-unsafe obstacle map $C_{\text{obs}}$ in configuration space (Fig. \ref{fig:taskspace_map} b). We place two obstacles in the manipulator's environment and set the maximum allowable force $F^{max}= 0.105N$ with a corresponding stiffness coefficient of each obstacle $k_{env} = 11.16 N/m$ and a safety factor $\delta$ of $0.95$. At each time step, we query the configuration $q_1(t), q_2(t)$ against the precomputed obstacle polygons $\mathcal{F}_{\mathrm{unsafe}}$ that are obtained from our algorithm as shown in Fig. \ref{fig:taskspace_map} c. Furthermore, we compute the simulated force exerted on each obstacle by the soft robot manipulator. For each configuration $q_1(t), q_2(t)$, the backbone is reconstructed as a series of points $s$ along the circular arc of the two segments. Each point's distance to the FODR's plane is evaluated, accounting for both the finite length of the plane and the manipulator thickness $r$. The local force at each point is then computed using the force-displacement model: $F_k(t) = k_{env}\max(0, r - n_k(t))$
where $n_k(t)$ is the signed distance from the $k$-th backbone point to the plane. We compute the simulated force for each configuration as the maximum of all the forces exerted at each of the points along the backbone $F(t) = \max_k F_k(t)$.

\begin{figure}[!b]
\centering    
\includegraphics[width=1.0\columnwidth]{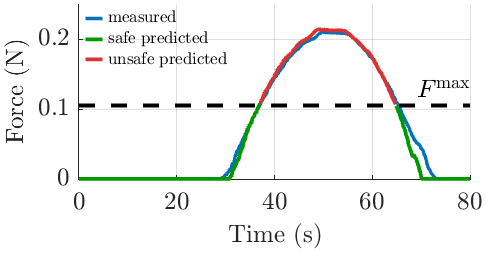}
\vspace{-0.5cm}
\caption{Simulated contact forces for Obstacle 1. Green and red data points indicate force-safe and force-unsafe predictions, respectively, relative to the threshold $F^{max}$(black dashed line). Hardware force measurements (blue) are shown for reference.}
\label{fig:forceplate1}
\vspace{-0.5cm}
\end{figure}

\subsection{Simulation Results}
A simulated animation was generated by rendering our soft robot backbone with thickness $r$ and obstacle geometry at each time step. The soft robot manipulator was colored green when $(q_1(t),q_2(t) \notin \mathcal{F}_{\mathrm{unsafe}}$ and red when $(q_1(t),q_2(t) \in \mathcal{F}_{\mathrm{unsafe}}$. We show a timelapse of this simulation in Fig. \ref{fig:sim_hardware_timelapse}. The timelapse verifies that configurations corresponding to force-unsafe deformation of the obstacle align with a red-colored soft robot manipulator. The manipulators transition into the red regime thereby confirms that the precomputed polygons accurately capture environmental contact force safety thresholds. For instance, we observe at $t = 35s$ that the manipulator deforms the first obstacle but does not exceed the maximum allowable force. However, at $t=50s$, the manipulator enters the $\mathrm{FODR}$ region thus exceeding the maximum force (Fig. \ref{fig:sim_hardware_timelapse}). 
Each force safety query takes an average of $0.000223s$, which meets the requirements for real-time execution. The simulated force exerted by the manipulator on both obstacles in its environment at each time-step is computed and plot over time as shown in Fig. \ref{fig:forceplate1} and Fig. \ref{fig:forceplate2}. 
\begin{figure}[!b]
\centering    
\includegraphics[width=1\columnwidth]{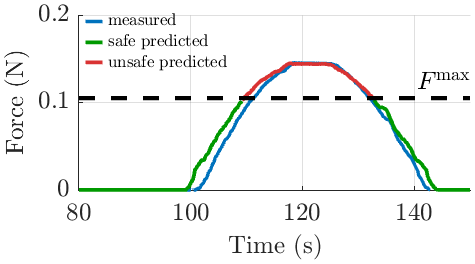}
\vspace{-0.5cm}
\caption{Simulated contact forces for Obstacle 2. Green and red data points indicate force-safe and force-unsafe predictions, respectively, relative to the threshold $F^{max}$(black dashed line). Hardware force measurements (blue) are shown for reference.}
\label{fig:forceplate2}
\end{figure}
Time-series plots are color-coded; green indicates force-safe configurations, while red indicates predicted force-unsafe contact. These results and visualizations confirm that the computed FODR regions correspond to the regions of force unsafety at each time-step. The simulated force plots therefore provide both a quantitative measure of interaction force magnitudes and a visual correspondence to the C-space unsafe regions, enabling thorough validation of our real-time force safety detection approach.

\section{Hardware Setup and Results}
To further validate our  approach, we develop a hardware experimental platform. Our results show that our method accurately detects poses that result in unsafe forces. 

\subsection{Hardware Setup}
The experimental platform (Fig. \ref{fig:hardware_setup}) consists of a planar soft pneumatic limb composed of two segments, each actuated by a pair of antagonistic air chambers molded from silicone elastomer. Differential pressurization of opposing chambers produces planar bending. Chamber pressures 	
are regulated by proportional valves (iQ Volta) under low-level PID control implemented on an Arduino microcontroller. Pressure readings are provided by pressure sensors (Honeywell MPRLS), and air is supplied from an in-house fabricated reservoir whose pressure is maintained via another microcontroller-driven PID loop controlling an air pump (Adafruit 4700). This closed-loop pneumatic infrastructure allows for stable pressure regulation and reproducible actuation during experiments.

The robot configuration is tracked using an overhead camera equipped with fiducial markers (AprilTags) placed at the base and end of each segment. The image processing pipeline, implemented in OpenCV, estimates the position of each marker and computes the corresponding joint angles using geometric relationships between tags. The camera provides an average update rate of approximately 12.8 fps.
\begin{figure}[!t]
\centering    
\includegraphics[width=1.0\columnwidth]{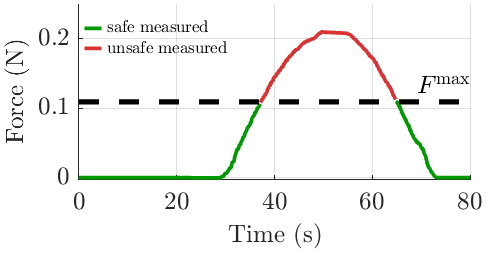}
\vspace{-0.5cm}
\caption{Measured contact forces from force plate 1 during the hardware experiments. Green and red data points indicate force-safe and force-unsafe values, respectively, relative to the threshold $F^{max}$ (black dashed line).}
\label{fig:measuredforceplate1}
\end{figure}
Environmental contact forces are measured using flexible ABS sheets attached to force gauge sensors that serve as deformable obstacles. Each plate has known stiffness $k_{env}$. To calibrate this stiffness, known masses are applied to the plate while deflections are measured, enabling computation of the spring constant from the linear force–deflection relationship. During operation, the force gauge measures the normal contact force applied by the manipulator. We then apply the proposed force safety detection algorithm (Alg. \ref{alg:task_to_cspace}) to identify configurations where the robot’s interaction with the environment exceeds the allowable force threshold, thereby indicating force-unsafe states.

\begin{figure}[!t]
\centering    
\includegraphics[width=1.0\columnwidth]{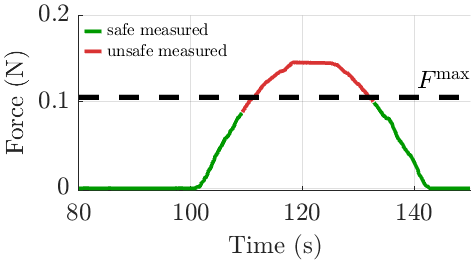}
\vspace{-0.5cm}
\caption{Measured contact forces from force plate 2 during the hardware experiments. Green and red data points indicate force-safe and force-unsafe values, respectively, relative to the threshold $F^{max}$ (black dashed line).}
\label{fig:measuredforceplate2}
\vspace{-0.5cm}
\end{figure}

\subsection{Hardware Experimental Results}
To validate the proposed framework in hardware, each configuration 
$\mathbf{q}$ was queried against the precomputed configuration-space obstacle set $\mathcal{F}_{\mathrm{unsafe}}$. 
Configurations within this set were labeled as force-unsafe, corresponding to robot poses that exerted  contact forces beyond the maximum threshold. Consequently, configurations outside this set were classified as force-safe.
The resulting classification was visualized by color-coding the manipulator; green indicates force-safe states, while red indicates predicted force-unsafe contact. The hardware experiments showed clear transitions between these two regimes as the manipulator approached and deflected the deformable plates as seen in Fig \ref{fig:measuredforceplate1} and \ref{fig:measuredforceplate2}.

\begin{figure}[!b]
\centering    
\includegraphics[width=0.8\columnwidth]{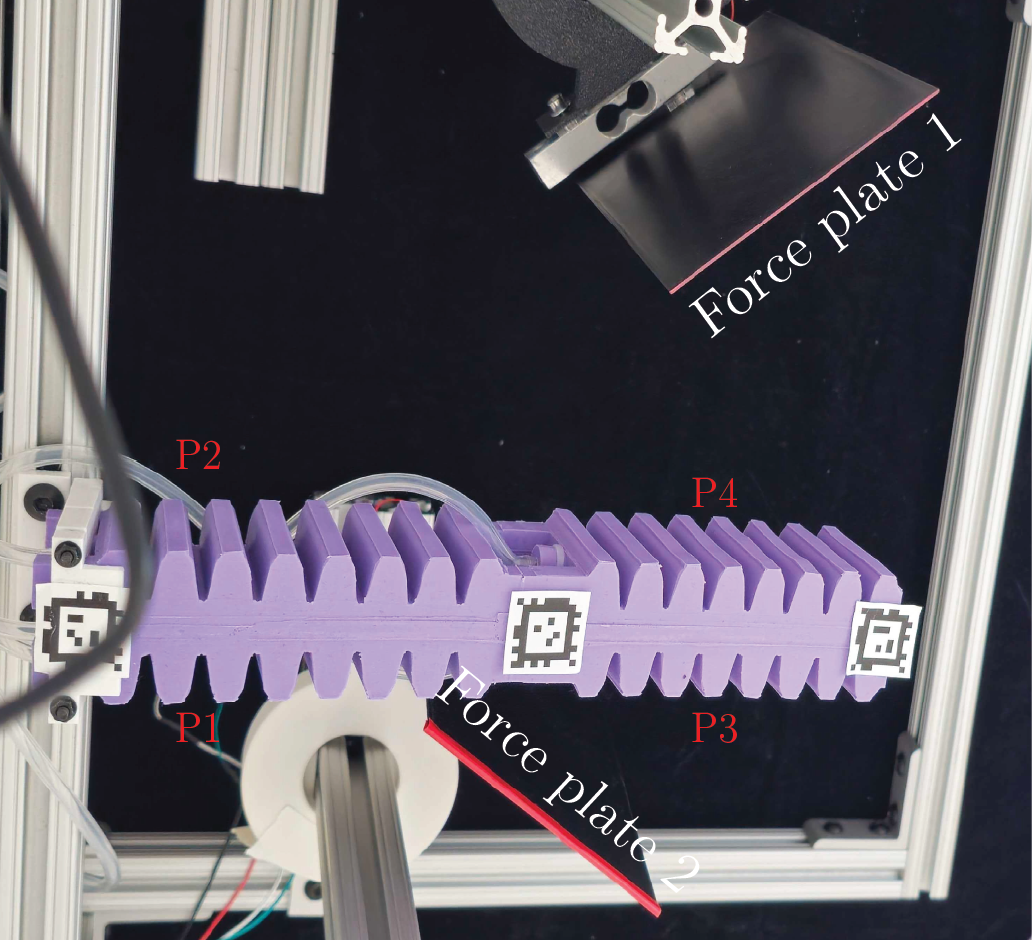}
\vspace{-0.3cm}
\caption{Hardware test setup with soft robot manipulator for force-safe real-time detection approach experimental validation. Robot actuation is achieved by increasing/decreasing the measured pressures ($p_1$, $p_2$, $p_3$, $p_4$) in the chambers of the segments.}
\label{fig:hardware_setup}
\end{figure}
We observe that periods of measured contact forces exceeding $F^{max}$ coincided precisely with red (force-unsafe) intervals in the color-coded force plots (Fig \ref{fig:measuredforceplate1} and \ref{fig:measuredforceplate2}), while periods of measured contact forces below $F^{max}$ coincided with green (force-safe) regions. These results confirm that the polygon $\mathcal{F}_{\mathrm{unsafe}}$ correctly represents the regions of excessive environmental contact forces.

\section{Discussion and Conclusion}
Results in both simulation and hardware confirm that our proposed approach accurately detects force-safe and unsafe states of a soft robot manipulator. Our experimental validation is currently limited to a 2D planar soft robot manipulator. While this restriction enables clear visualization and hardware implementation, it does not incorporate 3D motions. Notably, our theoretical framework is developed for 3D soft robot manipulators with $n$ many segments. Therefore, extending our hardware validation to 3D motions constitutes a technical implementation challenge rather than a theoretical limitation. 

Furthermore, our experiments assume PCC kinematics \cite{Webster2010Design},
a widely used kinematic modeling approximation in soft robotics. PCC kinematics models have been successfully implemented in multiple prior studies \cite{dickson2025real, Dickson2025Safe}, and our simulation and hardware results provide additional evidence of their practical fidelity. In addition, our approach can be adapted to alternative kinematic models by substituting the forward kinematics component.  

We also formulate our problem in a known environment with normal deformation of convex obstacles. This assumption allows for a clear interpretation of safety margins but does not generalize to dynamic, unknown or nonconvex environments commonly encountered in real-world tasks. While many effective strategies exist for handling environmental uncertainty and non convex environments, such as adaptive control, and convex decomposition of nonconvex obstacles \cite{dickson2024spline}, incorporating these into our framework remains an important direction for future work.

Overall, this work presents a framework for real-time force safety detection for soft robot manipulators. The proposed approach enables accurate force safety mapping and deformation detection, and lays a strong foundation for future extensions to force-safe motion planning and control for soft robot manipulators. Ultimately, this advances the goal of achieving force-safe, contact-aware manipulation in delicate real-world environments.

\bibliographystyle{ieeetr}
\bibliography{references}

\end{document}